\documentclass[a4paper,11pt,twoside,notitlepage]{amsart}

\usepackage[utf8]{inputenc}
\usepackage[T1]{fontenc}
\usepackage[margin=1in]{geometry}
\usepackage{amsmath,amssymb,amsthm,mathtools}
\usepackage{graphicx}
\usepackage{booktabs}
\usepackage{array}
\usepackage{tabularx}
\usepackage[shortlabels]{enumitem}
\usepackage{float}
\usepackage{xcolor}
\usepackage[hidelinks]{hyperref}
\usepackage{url}
\usepackage{microtype}

\numberwithin{equation}{section}
\graphicspath{{figures/}}
\allowdisplaybreaks

\newtheorem{theorem}{Theorem}[section]
\newtheorem{proposition}[theorem]{Proposition}
\theoremstyle{definition}
\newtheorem{definition}[theorem]{Definition}
\theoremstyle{remark}
\newtheorem{remark}[theorem]{Remark}

\newcommand{\dd}[2]{\frac{d#1}{d#2}}

\newcommand{\Vtrain}{T_{\mathrm{train}}}
\newcommand{\Veval}{T_{\mathrm{eval}}}
\newcommand{\Ef}{E_f^{\mathrm{full}}}
\newcommand{\Rmse}{\mathrm{RMSE}}
\newcommand{\MSE}{\mathrm{MSE}}
\DeclareMathOperator*{\argmin}{arg\,min}

\makeatletter
\@namedef{subjclassname@2020}{%
  \textup{2020} Mathematics Subject Classification}
\makeatother

\title[AI-Assisted Tumor-Growth Law Discovery]{AI-Assisted Tumor-Growth Law Discovery from Sparse Data: Refitting and LLM-Based Ranking}

\author[Y. Lu]{Yitian Lu}
\author[W. Wang]{Weinan Wang}
\thanks{Department of Mathematics, University of Oklahoma, Norman, OK, USA.
E-mail addresses: \texttt{yitian.lu-1@ou.edu} (Y. Lu),
\texttt{ww@ou.edu} (W. Wang).}

\date{}

\subjclass[2020]{Primary 34A55, 65L09; Secondary 68T07, 92C50}
\keywords{tumor growth, inverse problems, symbolic regression, physics-informed neural networks, ODE-constrained optimization, large language models, law discovery}

\hypersetup{
    pdftitle={AI-Assisted Tumor-Growth Law Discovery from Sparse Data: Refitting and LLM-Based Ranking},
    pdfauthor={Yitian Lu and Weinan Wang},
    pdfsubject={AI-assisted inverse modeling and law discovery for sparse tumor-growth data},
    pdfkeywords={tumor growth, inverse problems, symbolic regression, PINNs, ODE-constrained optimization, large language models}
}

\begin{document}

\begin{abstract}
Inferring tumor-growth laws from sparse, noisy observations requires more than
accurate curve fitting. We study an AI-assisted workflow involving symbolic
regression, mathematical admissibility checks, coefficient refitting, and large
language model (LLM) ranking. Synthetic Gompertz data allow trajectory fit,
forecasting, and recovery of the governing growth law to be assessed separately.
A dense noiseless control recovers the generating family, but sparse noisy
sampling frequently leads to alternative forms. Among retained candidates in
the tested configurations, physics-informed neural network refitting often
increases growth-law error, whereas classical nonlinear least squares usually
reduces it. A separate exploratory analysis of archived LLM rankings shows no
consistent improvement over selection by training error and identifies errors
in mathematical reasoning. Plausible saturation, accurate trajectory fits, and
convincing explanations can therefore coexist with incorrect growth laws.
Mathematical validity and law recovery require evaluation alongside predictive
performance.
\end{abstract}

\maketitle
\enlargethispage{2pt}

\section{Introduction}
Serial measurements of tumor volume are among the simplest longitudinal data
available in experimental oncology, but even these data do not determine a
unique growth mechanism. Low-dimensional ordinary differential equations reduce
the spatial, cellular, vascular, and microenvironmental complexity of a tumor to
a state variable \(V(t)>0\) and a net growth law
\(\dot V=f(V)\). Such models are deliberately phenomenological: their value is
that they provide interpretable descriptions of growth, testable saturation
behavior, and extrapolations beyond the observation window. Gompertzian tumor
growth has a long history \cite{laird1964}, while logistic, Richards,
von Bertalanffy, power-law, and related families encode different assumptions
about early proliferation and the onset of growth limitation
\cite{benzekry2014,gerlee2013,sarapata2014}. Comparative studies show that a
family that describes one data set well need not be uniformly preferred across
tumor systems or forecasting horizons \cite{benzekry2014,murphy2016}.

This distinction separates two inverse problems. In a fixed-law problem, the
modeler selects a family and estimates its constants from measured volumes. In a
law-discovery problem, the functional form of \(f\) is itself unknown. The latter
contains the former but is more weakly determined: a short noisy trajectory may
constrain the visited portion of state space while leaving low-volume behavior,
the carrying capacity, and the approach to equilibrium largely unobserved.
Practical identifiability can therefore fail even within a prespecified sigmoid
family \cite{raue2009,simpson2022}; allowing the equation family to vary only
widens the ambiguity. Distinct laws may interpolate the same observations and
yet imply materially different doubling times, saturation levels, forecasts, or
treatment conclusions \cite{murphy2016}. A small training error is consequently
evidence for local trajectory agreement, not by itself evidence that the
governing rate law has been recovered.

Data-driven equation discovery offers a way to search beyond a short list of
named models. Free-form symbolic regression seeks compact analytical expressions
directly from measurements \cite{schmidt2009}; sparse identification selects a
small set of active terms from a prescribed feature library
\cite{brunton2016}; and modern systems such as PySR combine evolutionary search,
constant optimization, and an explicit accuracy--complexity tradeoff
\cite{cranmer2023}. For an autonomous scalar ODE, a natural construction is to
fit a smooth trajectory and regress estimated phase-space pairs
\((V,\dot V)\). Numerical
differentiation of noisy measurements is ill-posed, and smoothing choices trade
variance against bias \cite{vanbreugel2020}. Under sparse sampling, the symbolic
search may therefore fit artifacts of the derivative estimate or select a law
that behaves plausibly only over the narrow observed volume range. A small
phase-space loss does not establish mathematical admissibility or accuracy of
the candidate's ODE trajectory.

Refitting the constants of a frozen symbolic structure provides a second test.
ODE-constrained nonlinear least squares compares the trajectory generated by the
candidate with the observed volumes. Physics-informed neural networks (PINNs)
provide a complementary route by fitting a neural trajectory while penalizing
the candidate ODE residual \cite{raissi2019}. Their flexibility does not remove
the inverse-problem difficulties: composite objectives can have imbalanced
gradients, difficult loss landscapes, and optimization-dependent failure modes
\cite{krishnapriyan2021,wang2021}. Symbolic candidates add a further issue that
is muted in standard PINN demonstrations: logarithms, fractional powers, and
parameter-dependent singularities can make a candidate real-valued only on a
restricted state domain. Meaningful comparison therefore requires
candidate-aware parameterizations and explicit statuses for invalid domains,
solver failures, stagnation, and non-convergence.

Large language models introduce a different kind of scientific prior. Recent
systems use LLMs or foundation models to propose equation structures, construct
feature libraries, or guide iterative symbolic search
\cite{mower2025,shojaee2024}; recent benchmarks also emphasize the need to
separate genuine equation discovery from recognition or memorization of familiar
laws \cite{shojaee2025bench}. Here the LLM ranks accepted candidate equations;
it does not generate equations, refit constants, or
decide mathematical admissibility. It receives a table of candidate formulas,
complexities, and training errors, together with the observations and a statement
of the filters already passed. Research on LLM evaluators suggests that comparative judgments can be
useful but can also depend on presentation order and other prompt features
\cite{wang2024,zheng2023}. We examine six archived rankings under one common
prompt, compare their top choices with a training-error baseline, and assess
the mathematical content of their explanations. The records permit a
descriptive assessment of these choices, not an estimate of performance across
models or presentation orders.

The benchmark addresses three linked questions:
\begin{enumerate}[label=(Q\arabic*),leftmargin=3em,itemsep=2pt]
    \item Can symbolic regression recover a defensible growth-law family as
    measurements become sparse and noisy?
    \item Do classical and PINN refitting improve trajectory prediction without
    necessarily improving recovery of the vector field?
    \item Do the recorded LLM choices, made from formulas and training information,
    yield competitive hidden forecast and law-level performance?
\end{enumerate}
These questions are examined in a controlled synthetic setting, where
``discovery'' means recovery of a known generating law from hidden synthetic
truth. The controlled setting makes it possible to evaluate three objectives
that would not all be observable in clinical data:
\[
\text{fit the observed points}, \qquad
\text{forecast the trajectory}, \qquad
\text{recover the vector field } f(V).
\]
The archived LLM input files omit the generating equation, held-out volumes,
forecast errors, and vector-field errors. The saved rankings are scored against
these quantities without changing any ranked expression. The original pool
construction and its consequences for blinding are distinguished from the
training-only numerical protocol below.

\begin{enumerate}[leftmargin=2em,itemsep=2pt]
    \item An AI-assisted benchmark combines symbolic candidate generation,
    deterministic admissibility filtering, and classical and PINN-style
    constant refitting with a separate analysis of archived LLM rankings.
    \item The evaluation separates training fit, held-out forecasting, and vector-field
    recovery and detects plausible saturation
    surrogates---including \(\log(C-V)\)-type laws---that fit observed data while
    retaining the wrong low-volume or law-level structure.
    \item Candidate-aware domain handling and explicit numerical statuses record
    unsupported domains, non-convergence, NaN losses, and unstable refits as
    controlled outcomes. For supported parameterized domains, smooth state maps
    are defined to keep PINN iterates within the candidate domain.
    \item An exploratory LLM case study evaluates six complete rankings of 145
    supplied candidates. Hidden-error comparisons with a deterministic baseline
    and an audit of the explanations separate ranking outcomes from claims of
    scientific understanding or mathematical certification.
\end{enumerate}

\section{Tumor-growth laws and benchmark data}
The benchmark considers autonomous scalar laws
\begin{equation}
    \dd{V}{t}=f(V;\theta), \qquad V(0)=V_0>0.
\end{equation}
The main positive-control benchmark is Gompertz growth
\cite{benzekry2014,gompertz1825,laird1964},
\begin{equation}
    f_{\mathrm{G}}(V)=aV\log(K/V),
    \label{eq:gompertz}
\end{equation}
with analytic solution
\begin{equation}
    V(t)=K\exp\left(\log(V_0/K)e^{-at}\right).
\end{equation}
Unless otherwise stated, the synthetic Gompertz benchmark uses
\begin{equation}
    V_0=0.1, \qquad a=0.5, \qquad K=20.
\end{equation}
Time and volume are expressed in consistent arbitrary units.
For context, common reference families include logistic,
Richards/generalized logistic \cite{richards1959}, and von Bertalanffy-type
\cite{vonbertalanffy1957} laws:
\begin{align}
    f_{\mathrm{L}}(V)&=rV(1-V/K), \\
    f_{\mathrm{R}}(V)&=rV\left(1-(V/K)^\nu\right), \\
    f_{\mathrm{B}}(V)&=\alpha V^p-\beta V, \qquad 0<p<1.
\end{align}
Classical tumor-growth studies compare these and related laws, and no single
law is uniformly preferred across all data sets
\cite{benzekry2014,murphy2016}.
A separate illustration uses a Richards generating law to examine fitting with
a misspecified Gompertz family.

Synthetic observations are generated by solving the ODE and sampling
\begin{equation}
    y_i = V(t_i)(1+\sigma\epsilon_i),
    \qquad
    \epsilon_i\sim \mathcal N(0,1)\ \text{conditioned on}\
    \epsilon_i\ge -1/\sigma,
    \quad \sigma>0,
\end{equation}
which guarantees nonnegative multiplicative observations; at \(\sigma=0\),
\(y_i=V(t_i)\). The principal sparse configuration uses
\begin{equation}
    (t_i)_{i=1}^{6}=(0,3,6,9,12,15),
\end{equation}
and evaluates forecasts at six noiseless times from \(15.5\) through \(25.5\).
Other noise levels and sample sizes are identified in their corresponding
tables or captions, including the dense positive control with \(N=50\). The
ODE solvers and filters start from the known value \(V_0=0.1\). In the
simulations, multiplicative noise is also applied at \(t=0\), while the fitted
ODEs impose \(V(0)=V_0\). Thus the residual at the first observation includes
measurement noise even for the exact generating law. Training errors are
computed against all supplied observations, including this initial point.
Held-out volumes and synthetic-oracle error columns are excluded from the
archived LLM input files.

\begin{figure}[htbp]
\centering
\includegraphics[width=0.65\textwidth]{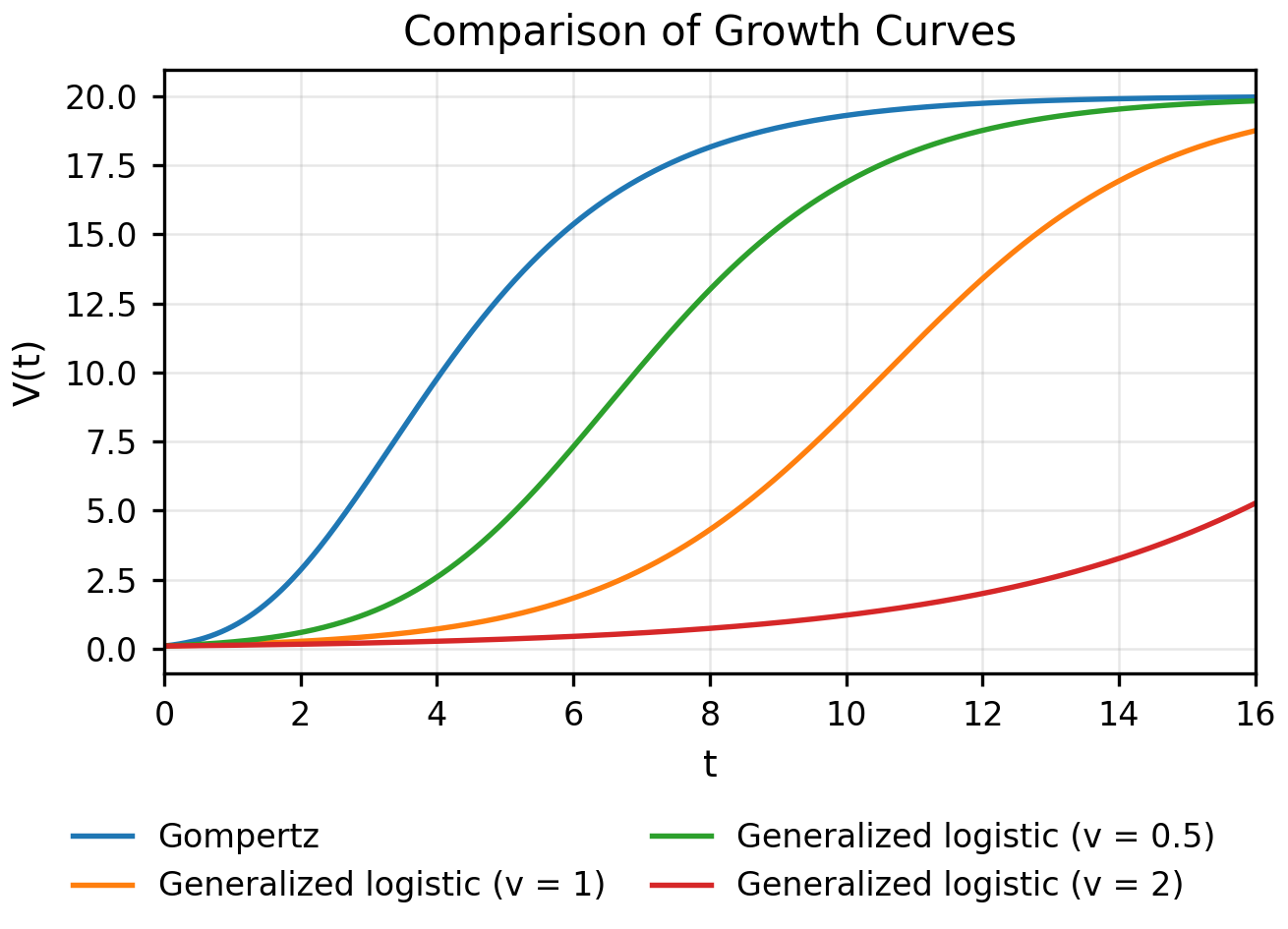}
\caption{Representative trajectories with \(K=20\) and \(V_0=0.1\).
Gompertz uses \(a=0.5\); the generalized-logistic curves use
\(\nu=0.5,1,2\) and \(r=0.5/\nu\) in the Richards convention above.
Sparse uniform sampling can concentrate observations near saturation, where
several laws are difficult to distinguish.}
\label{fig:laws}
\end{figure}

\subsection{Near-zero behavior of the Gompertz vector field}
The logarithm in \eqref{eq:gompertz} diverges as \(V\to0^+\), but the vector
field itself has a removable zero-limit:
\begin{equation}
    \lim_{V\to0^+} V\log(K/V)=0.
\end{equation}
The derivative with respect to \(V\) is singular,
\begin{equation}
    \frac{d}{dV}\left[aV\log(K/V)\right]
    = a\left(\log(K/V)-1\right),
\end{equation}
which matters numerically. In PINN training, the immediate risk is that the
neural surrogate evaluates the logarithm at nonpositive values, or that a
candidate law has a restricted domain such as \(V<C\).

\section{The AI-assisted discovery and refitting pipeline}
\label{sec:pipeline}
The pipeline has five modules:
\[
\begin{gathered}
\text{data}
\longrightarrow
\text{candidate generation}
\longrightarrow
\text{admissibility filtering}\\
\longrightarrow
\text{refitting}
\longrightarrow
\text{ranking and evaluation}.
\end{gathered}
\]
LLM ranking forms the final stage; the archived ranking case study is described
in Section~\ref{sec:llm-evaluation}.

\subsection{Candidate generation by symbolic regression}
The discovery route first fits a smooth surrogate \(\widetilde V(t)\). It then
differentiates this surrogate to obtain \(\widetilde{\dot V}(t)\) and feeds
the phase-space pairs
\begin{equation}
    \left(\widetilde V(s_q),\widetilde{\dot V}(s_q)\right),
    \qquad q=1,\ldots,200,
\end{equation}
into symbolic regression. In sparse noisy data this step is fragile because
numerical differentiation amplifies noise and sparse uniform samples may contain
little information about the early-growth part of the vector field
\cite{vanbreugel2020}.

Unless stated otherwise, the smoothing spline has degree
\(k=\min(3,N-1)\) and smoothing factor
\begin{equation}
    S=N\left[0.05\left(\max_i y_i-\min_i y_i\right)\right]^2.
\end{equation}
It is evaluated on 200 equally spaced times over the training interval. The
symbolic-regression runs use 1000 iterations, squared loss, a maximum expression
size of 20, addition, subtraction, multiplication, division, exponentiation,
the unary operator \(\log\), deterministic serial execution, and random seed 77. Every expression
in the resulting hall of fame is passed to the subsequent filter; phase-space
loss alone does not determine the final candidate.
Operator complexity weights are 1 for addition and subtraction, 2 for
multiplication and division, and 3 for exponentiation and logarithms. The dense
noiseless control uses an interpolating quintic spline (\(S=0\)); the
30-realization sparse-search comparison also uses degree five with the
smoothing factor above. The sampling-density diagnostic identifies its cubic
and quintic settings separately.

\subsection{Rule-based admissibility filters}
A candidate \(f_m\) is screened before refitting. The filter returns both an
accept/reject decision and a primary status.

\begin{definition}[Candidate state domain]
For a symbolic candidate \(f_m(V;\theta_m)\), its admissible state domain is
\begin{equation}
    D_m(\theta_m)
    =
    \left\{
        V>0:
        f_m(V;\theta_m)\in\mathbb R
        \text{ and }
        \lvert f_m(V;\theta_m)\rvert<\infty
    \right\}.
    \label{eq:candidate-domain}
\end{equation}
A trajectory is domain-valid on a time interval \(I\) if
\(V(t)\in D_m(\theta_m)\) for every \(t\in I\).
\end{definition}

The main rejection categories are
\begin{equation}
\begin{gathered}
\text{domain failure},\quad
\text{no positive equilibrium},\quad
\text{unstable equilibrium},\\
\text{negative trajectory},\quad
\text{unsolvable ODE}.
\end{gathered}
\end{equation}
PySR caps expression complexity during generation; complexity is not a
post-generation filter status.

The filter checks the following conditions.
\begin{enumerate}[leftmargin=2em,itemsep=2pt]
    \item \textbf{Autonomy.} The candidate is a function of the single PySR
    predictor \(V\), not explicit time.
    \item \textbf{Domain validity.} The expression must be real-valued and finite
    on the positive-volume interval required for numerical checks.
    \item \textbf{Stable equilibrium.} When saturation is expected, there must
    exist \(V^\star>0\) such that
    \begin{equation}
        f_m(V^\star)=0, \qquad f_m'(V^\star)<0.
    \end{equation}
    The equilibrium need not lie inside the observed training-volume range.
    The initial condition must belong to its basin of attraction; a stable
    root separated from \(V_0\) by another equilibrium is insufficient.
    \item \textbf{Forward positivity.} The ODE solution from \(V_0\) must remain
    positive on the training and forecast interval:
    \begin{equation}
        \min_{0\le t\le \Veval} V_m(t;\theta)>0.
    \end{equation}
    \item \textbf{Solver status.} NaN evaluation and failure to obtain a finite
    forward solution are recorded as candidate-level statuses.
\end{enumerate}

For the principal numerical refitting comparisons, define
\[
 L=\min\{V_0,\min_{i:y_i>0}y_i\},\qquad
 U=\max\{V_0,\max_i y_i\}.
\]
The expression must be real and finite on \([L,U]\). Equilibria are searched
on \([L,2U]\), and the domain check also covers the interval from \(V_0\) to
the first equilibrium in its direction of flow. That equilibrium must be
stable. Symbolic real-domain and root checks supplement numerical screening;
an unresolved required check is not treated as a pass. The forward solution is
checked through the fixed forecast horizon. Neither future volumes nor the
generating equation enters these acceptance decisions. The same checks are
applied to the original expression and to each restored refit.

The forward-positivity condition is essential. A candidate may possess a stable
positive equilibrium and still be invalid from the observed initial condition.
For example,
\begin{equation}
    f(V)=-V\bigl(\log(V+5.912802)-3.3113704\bigr)-1.5374633
    \label{eq:nonphysical}
\end{equation}
has \(f(0)<0\), so small initial volumes are driven downward. It has a stable
positive equilibrium near saturation, but the initial condition \(V_0=0.1\) lies
on a forward trajectory that becomes biologically inadmissible.

\begin{figure}[htbp]
\centering
\includegraphics[width=0.66\textwidth]{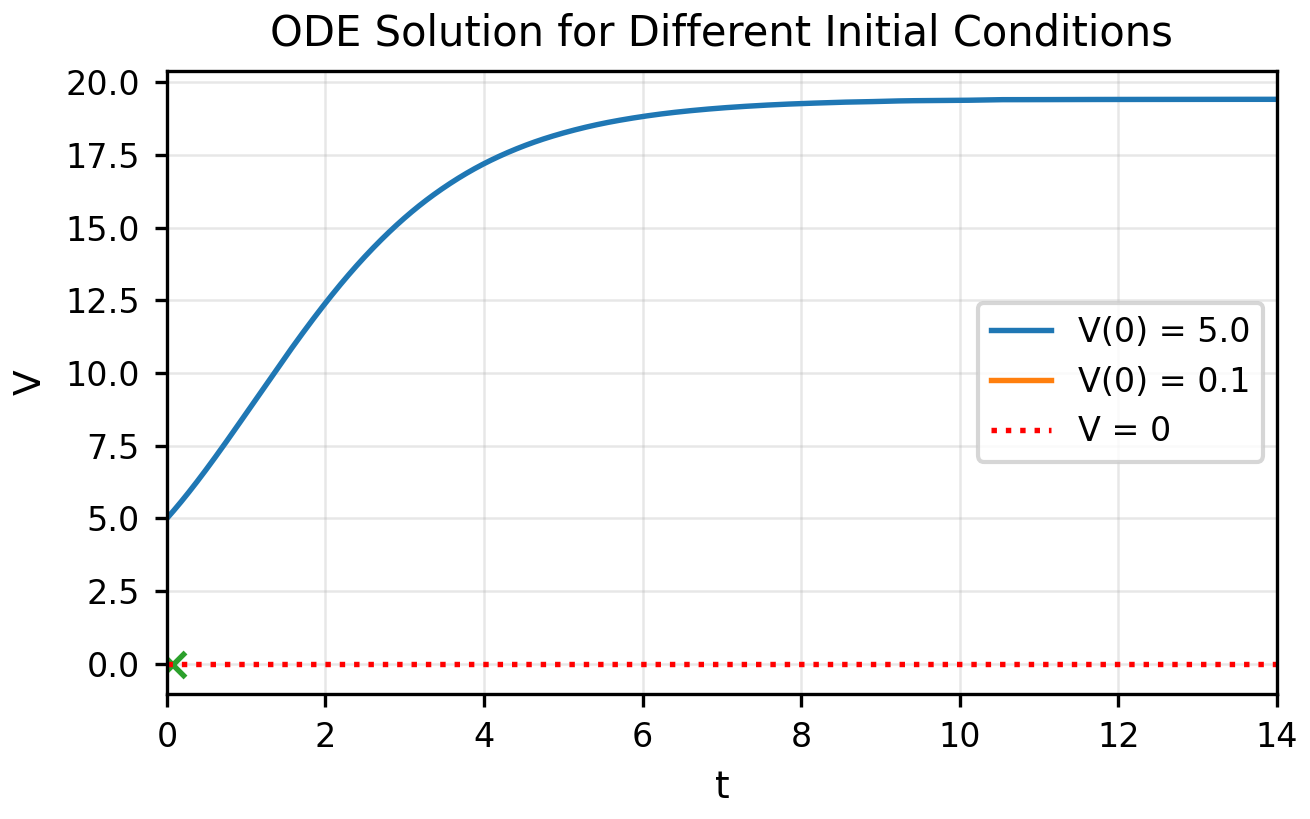}
\caption{A candidate law can have a stable positive equilibrium and still be
inadmissible from the observed initial condition. The forward-positivity filter
rejects candidates that drive \(V(t)\) negative or leave the candidate's domain
on the evaluation interval.}
\label{fig:nonphysical}
\end{figure}

\subsection{Classical ODE-constrained nonlinear least squares}
For each accepted candidate \(f_m(V;\theta_m)\), constants are refit by solving
\begin{equation}
    \dd{V}{t}=f_m(V;\theta_m), \qquad V(0)=V_0,
\end{equation}
and minimizing trajectory residuals against the observed data. Writing
\begin{equation}
    r_m(\theta_m)
    =
    \bigl(
        V_m(t_1;\theta_m)-y_1,\ldots,
        V_m(t_N;\theta_m)-y_N
    \bigr)^\top,
\end{equation}
the objective is
\begin{equation}
    J_m(\theta_m)
    =
    \frac12\|r_m(\theta_m)\|_2^2,
    \qquad
    \widehat\theta_m^{\mathrm{NLS}}
    =
    \argmin_{\theta_m\in\mathbb R^{p_m}}J_m(\theta_m).
    \label{eq:nls}
\end{equation}
The implementation uses unconstrained Levenberg--Marquardt nonlinear least
squares with at most 2000 residual evaluations
\cite{levenberg1944,marquardt1963}. Each residual evaluation solves the ODE from
the known initial volume. If integration fails or returns a nonfinite
trajectory, the residual function returns a large finite vector so that invalid
values do not propagate through the optimizer. The restored expression is then
re-evaluated by the same admissibility checks used for the original candidate.

The symbolic structure is frozen during this refit. Constants may move, but
exponents remain fixed unless explicitly included as model parameters.
For example,
\begin{equation}
    V^2(a-bV)+c
\end{equation}
may become
\begin{equation}
    V^2(\widehat a-\widehat bV)+\widehat c,
\end{equation}
but not
\begin{equation}
    V^{0.41}(\widehat a-\widehat bV)+\widehat c.
\end{equation}
The latter is a different law family.

Optimizer termination and admissibility of the restored equation are distinct
checks. A finite trajectory does not establish convergence of the optimizer,
and unchanged constants alone do not distinguish a stationary point from
stagnation. Accordingly, parameter movement is interpreted together with the
optimizer's termination status; no first-order optimality claim follows solely
from a small change in constants.

\subsection{Candidate-aware PINN refitting}
The PINN arm fits a neural surrogate \(V_\phi(t)\) and candidate constants
\(\theta_m\) by minimizing
\begin{equation}
    \mathcal L(\phi,\theta_m)
    =
    \mathcal L_{\mathrm{data}}
    +
    \lambda_{\mathrm{phys}}\mathcal L_{\mathrm{phys}},
\end{equation}
where
\begin{equation}
    \mathcal L_{\mathrm{data}}
    =
    \frac1N\sum_{i=1}^N
    \left(\frac{V_\phi(t_i)-y_i}{V_{\mathrm{scale}}}\right)^2
\end{equation}
and
\begin{equation}
    \mathcal L_{\mathrm{phys}}
    =
    \frac1Q\sum_{q=1}^Q
    \left(
    \frac{\frac{dV_\phi}{dt}(s_q)-f_m(V_\phi(s_q);\theta_m)}
    {V_{\mathrm{scale}}/T_{\mathrm{scale}}}
    \right)^2.
\end{equation}
The candidate right-hand side is evaluated only after the neural output has been
mapped into the admissible state domain \(D_m(\theta_m)\).

The candidate-aware PINN protocol specifies two hidden \(\tanh\) layers of width 20. Time
is normalized to \([0,1]\), and volume and residual scales use the observed
training-volume range. Unless a physics-weight comparison states otherwise,
\(\lambda_{\mathrm{phys}}=0.1\). Training uses 1000 collocation times, Adam with
learning rates \(10^{-2}\) for network weights and \(10^{-1}\) for normalized
candidate constants, which are mapped back to physical constants for the ODE,
300 epochs, and gradient clipping at norm 1. The principal sparse
protocol fixes the data-realization seed at 47 and the PINN initialization seeds
at 43--62, giving 20 starts per candidate. The observed initial point enters the
data loss; no separate hard initial-condition constraint is imposed.
A retained PINN start completes this fixed training budget with finite losses,
passes the restored-equation filter, and has finite solved-ODE training MSE.
This numerical-success definition does not assert first-order optimality.

Let
\begin{equation}
    \rho(z)=\frac{1}{1+e^{-z}}
\end{equation}
denote the logistic map. For candidates whose only state restriction is
positivity, the output map is
\begin{equation}
    V_\phi(t)
    =
    \varepsilon_0+V_{\mathrm{scale}}
    \operatorname{softplus}(z_\phi(t)).
    \label{eq:positive-map}
\end{equation}
where \(\varepsilon_0=10^{-6}V_{\mathrm{scale}}\).
For a supported bounded domain \(D_m(\theta_m)=(0,U_m(\theta_m))\), the output map is
\begin{equation}
    V_\phi(t)
    =
    \varepsilon_l+
    \bigl(U_m(\theta_m)-\varepsilon_u-\varepsilon_l\bigr)\rho(z_\phi(t)),
    \label{eq:bounded-map}
\end{equation}

\begin{proposition}[Domain preservation]
If \(\varepsilon_l,\varepsilon_u>0\) and
\(\varepsilon_l+\varepsilon_u<U_m(\theta_m)\), then the output map in
\eqref{eq:bounded-map} satisfies
\begin{equation}
    \varepsilon_l<V_\phi(t)<U_m(\theta_m)-\varepsilon_u
\end{equation}
for every \(t\) and every finite network output \(z_\phi(t)\).
\end{proposition}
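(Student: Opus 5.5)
The argument is a direct monotonicity check on the affine image of the logistic map. Write \(U=U_m(\theta_m)\) and \(z=z_\phi(t)\), a finite real number. Set
\[
\Delta=U-\varepsilon_u-\varepsilon_l,
\]
which is strictly positive by the hypothesis \(\varepsilon_l+\varepsilon_u<U\). Then \eqref{eq:bounded-map} reads
\[
V_\phi(t)=\varepsilon_l+\Delta\,\rho(z),
\]
an affine function of \(\rho(z)\) with positive slope \(\Delta\).

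The first step is to show that \(0<\rho(z)<1\) strictly for every finite \(z\). Since \(e^{-z}\in(0,\infty)\) for finite \(z\), the denominator \(1+e^{-z}\) lies in \((1,\infty)\). Its reciprocal therefore lies in \((0,1)\). Both endpoints are excluded, because they would require \(e^{-z}=0\) or \(e^{-z}=\infty\). This is exactly where finiteness of the network output is used.

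The second step transfers the bound through the affine map. Multiply \(0<\rho(z)<1\) by \(\Delta>0\), which preserves strict inequalities, to get \(0<\Delta\rho(z)<\Delta\). Then add \(\varepsilon_l\):
\[
\varepsilon_l<V_\phi(t)<\varepsilon_l+\Delta=U-\varepsilon_u.
\]
Since \(t\) was arbitrary, this is the claim.

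There is no real obstacle here. The only point to watch is that positivity of \(\Delta\) is what makes the multiplication order-preserving; this is why the hypothesis \(\varepsilon_l+\varepsilon_u<U\) is needed. One can also remark afterwards that \(\varepsilon_l>0\) and \(U-\varepsilon_u<U\) place \(V_\phi(t)\) inside \((0,U)=D_m(\theta_m)\), so the candidate right-hand side is evaluated only on its admissible domain.
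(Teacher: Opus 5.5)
Your proof is correct and is essentially the paper's argument: strict bounds \(0<\rho(z)<1\) for finite \(z\), multiplication by the positive factor \(U_m(\theta_m)-\varepsilon_u-\varepsilon_l\), and a shift by \(\varepsilon_l\). The only difference is that you write out why the logistic bounds are strict, which the paper states without justification.
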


\begin{proof}
Since \(0<\rho(z)<1\) for every \(z\in\mathbb R\), multiplication by
\(U_m(\theta_m)-\varepsilon_u-\varepsilon_l>0\) and addition of
\(\varepsilon_l\) give the
stated bounds.
\end{proof}

When the upper boundary depends on a refitted constant, domain preservation
requires evaluating \(U_m(\theta_m)\) at the current parameter values and
maintaining \(U_m(\theta_m)>\varepsilon_l+\varepsilon_u\).
The implementation uses \(\varepsilon_l=10^{-3}\) and
\(\varepsilon_u=10^{-6}\max\{1,|U_m(\theta_m)|\}\), with bounds evaluated
from the current physical constants. This margin prevents a floating-point
sigmoid value of one from reaching a strict logarithmic domain boundary.
The computational upper cap is \(10^6\); an interval of width at most one
or a failed bound evaluation receives a domain-failure status.
For an affine logarithmic constraint with lower boundary \(L_m(\theta_m)\),
the lower endpoint becomes
\(\max\{10^{-3},L_m(\theta_m)+10^{-6}\max(1,|L_m(\theta_m)|)\}\).
Multiple upper constraints are combined by their minimum and multiple lower
constraints by their maximum, evaluated at the current constants.

For symbolic candidates with disconnected or nested parameter-dependent domains
that cannot be parameterized reliably, the PINN attempt is assigned the status
\[
    \text{unsupported candidate domain}.
\]
The classical ODE-constrained arm is evaluated
independently and may still succeed.

\begin{remark}
If an invalid operation has already produced NaN inside the automatic
differentiation graph, replacing the final scalar loss by a fixed constant does
not restore a valid gradient; a literal constant also has zero derivative.
Likewise, perturbing a candidate parameter whenever an invalid evaluation occurs
changes the law during optimization, and clipping the network to the observed
volume range changes the state space on which the physics residual is enforced.
Accordingly, the candidate domain is enforced before \(f_m\) is evaluated; when
such enforcement is unavailable, the attempt receives a controlled domain
status.
\end{remark}

The network prediction \(V_\phi(t)\) and the ODE solution using the refitted
constants are reported separately. The principal forecast of the recovered law
is obtained by solving
\begin{equation}
    \dd{V}{t}=f_m(V;\widehat\theta_m^{\mathrm{PINN}}),
    \qquad V(0)=V_0.
\end{equation}
The neural-network forecast is a separate diagnostic. The discrepancy between
the two trajectories is measured by the dimensionless PINN--ODE consistency gap
\begin{equation}
G_{\mathrm{PINN-ODE}}
=
\frac{1}{V_{\mathrm{scale}}}\left(
\frac1J\sum_{j=1}^J
\left[V_\phi(t_j)-V_m(t_j;\widehat\theta_m^{\mathrm{PINN}})\right]^2
\right)^{1/2}.
\end{equation}
Valid consistency gaps evaluate both trajectories on a common grid in physical
time and volume units, then normalize by the fixed training-volume range.
The grid contains 1000 equally spaced training times. A large gap indicates that the neural surrogate and
restored ODE are inconsistent, but no thresholded pass/fail rule is imposed.

\subsection{Post-hoc LLM ranking}
The archived prompt defines a ranking and explanation task. Each input consists
of the noise level, training and forecast intervals, six training observations,
and a candidate table with four columns: candidate ID, fitted right-hand side,
symbolic complexity, and training-trajectory MSE. The prompt states that the
listed candidates have passed real-domain, positive-solution, and
stable-equilibrium checks. It withholds the generating equation and parameters,
future volumes, forecast errors, and law-level errors. The six input tables pool
original PySR, PINN-refitted, and classical-refitted expressions; method names
are omitted, but the rows are interleaved and the IDs sequential. Thus the
prompt's description of the table as randomized is not supported by the stored
inputs.

These historical candidate pools were formed before the training-only filter
and common-unit PINN--ODE gap described above. Their original filter used a
volume bound derived from the synthetic future trajectory. Accordingly, direct
withholding of the oracle columns does not establish oracle-independent pool
construction. The LLM analysis is a descriptive case study conditional on these
historical pools, not a blinded comparison of the corrected refitting arms.
Every saved ranking is evaluated on its exact supplied expressions, without
substituting a later refit or changing pool membership.

The prompt asks the evaluator to rank every candidate exactly once, applying
mathematical and biological robustness first, observed-data agreement second,
and parsimony and interpretability third. It cannot refit or simplify an
expression, create a new law, reject an accepted law, or override a filter. This
role separates the requested ranking from equation generation and numerical
optimization. The saved responses are assessed for both selection outcomes and
the validity of their explanations.

For synthetic data, LLM ranking performance is defined by
\begin{equation}
    \mathrm{LLMTop1}
    =
    \mathbf 1\{\text{LLM top-ranked candidate matches the generating family}\},
\end{equation}
forecast regret
\begin{equation}
    \mathrm{Regret}_{\mathrm{forecast}}
    =
    \Rmse_{\mathrm{forecast}}(\text{LLM top choice})
    -
    \min_m \Rmse_{\mathrm{forecast}}(m),
\end{equation}
and law-error regret
\begin{equation}
    \mathrm{Regret}_{f}
    =
    \Ef(\text{LLM top choice})-
    \min_m \Ef(m),
\end{equation}
where each minimum is taken over the complete supplied candidate pool for that
noise level. A regret is reported only when the corresponding
oracle metric is finite for every candidate in that trial; otherwise its
availability status is retained. \(\mathrm{LLMTop1}\) is defined only
when the generating family is represented in the supplied candidate pool;
otherwise the trial measures prioritization and regret but not top-one
family recovery.
The retrospective deterministic comparison chooses the smallest supplied
training-trajectory MSE, breaking exact ties by complexity and then candidate
ID. Its forecast and law errors are evaluated on the same equations and grids
as the saved LLM choices. This baseline is an author-computed comparison of the
archived outcomes.

\section{Evaluation metrics}
The candidate table contains both numerical metrics and status labels.

\paragraph{Training trajectory error.}
\begin{equation}
    \MSE_{\mathrm{train}}(m)=
    \frac1N\sum_{i=1}^N
    \left[V_m(t_i;\widehat\theta_m)-y_i\right]^2.
\end{equation}
For the PySR and refit-ODE rows, \(V_m\) is obtained by solving the candidate
ODE from \(V_0\). A direct PINN-network training MSE is reported separately and
is never substituted for the solved-ODE value.
The initial observation contributes a residual even for the generating law when
noise is applied at \(t=0\), as described in Section~2.

\paragraph{Forecast error.}
On a held-out time grid \(\{\tau_j\}_{j=1}^J\subset [\Vtrain,\Veval]\),
\begin{equation}
    \Rmse_{\mathrm{forecast}}(m)=
    \left(
    \frac1J\sum_{j=1}^J
    \left[V_m(\tau_j;\widehat\theta_m)-V_{\mathrm{true}}(\tau_j)\right]^2
    \right)^{1/2}.
\end{equation}
The principal forecast metric likewise uses the solved candidate ODE. Direct
network forecasts are diagnostics of the neural surrogate, not forecasts of the
restored governing law.

\paragraph{Law-level error.}
For synthetic Gompertz benchmarks, law recovery is measured by
\begin{equation}
\Ef(m)
=
\left(
\frac{
\int_{0.1}^{20}
\left[f_m(V;\widehat\theta_m)-0.5V\log(20/V)\right]^2\,dV
}{
\int_{0.1}^{20}
\left[0.5V\log(20/V)\right]^2\,dV
}
\right)^{1/2}.
\label{eq:ef}
\end{equation}
This dimensionless relative \(L^2\) vector-field error is a synthetic-oracle
metric for the restored candidate ODE; it is not defined for the direct network
output and is never supplied to the LLM.
If the candidate is not real-valued and finite on \([0.1,20]\), the metric is
recorded as a domain failure; the integration interval is not shortened.

\paragraph{Numerical statuses.}
Domain validity, optimizer termination, and integration outcomes accompany the
error metrics. A numerical comparison requires finite
values of the metric under consideration. Failure statuses include
\begin{equation}
\begin{gathered}
\text{Domain Failure},\quad
\text{Unsupported Candidate Domain or Expression},\quad
\text{NaN Loss},\\
\text{Unstable Refit},\quad
\text{Optimizer Stagnation},\quad
\text{Non-convergence}.
\end{gathered}
\end{equation}
The denominator and success criterion are stated with each summary. A finite
forecast alone does not establish biological admissibility or optimizer
convergence.

\subsection{Experimental units and paired summaries}
The independent experimental unit is a generated data realization. Candidate
expressions share the observations and derivative estimates from a
symbolic-regression run, and repeated PINN starts share a candidate structure.
Table~\ref{tab:ranking-instability} summarizes 30 data realizations. The remaining
numerical tables describe selected candidates or candidate sets within stated
realizations. Repeated starts are summarized within each candidate; the
representative reported by the main refitting program minimizes training MSE
among retained starts. For a metric \(M\), the
paired change for arm \(A\) is
\begin{equation}
    \Delta M^{A}_{r,m}
    =M\!\left(f^{A}_{r,m}\right)
    -M\!\left(f^{\mathrm{PySR}}_{r,m}\right),
\end{equation}
so a negative value denotes improvement over the original PySR constants.
Medians over candidate expressions describe that candidate set and are not
interpreted as independent treatment effects.
\section{Results}

\subsection{Dense noiseless positive control}
The dense noiseless example uses
\begin{equation}
    N=50, \qquad \sigma=0, \qquad 0\le t\le14.
\end{equation}
Its PINN refit uses 500 epochs, candidate-parameter learning rate
\(10^{-5}\), and initialization seed 42; other network settings agree with
the main configuration.
The recovered expression, rounded to the displayed precision,
\begin{equation}
    \dot V=V(1.50-0.500\log V)
    \label{eq:recovered-gompertz}
\end{equation}
is algebraically Gompertz. Indeed,
\begin{equation}
    aV\log(K/V)=aV\log K-aV\log V.
\end{equation}
Matching \eqref{eq:recovered-gompertz} gives
\begin{equation}
    a=0.500, \qquad a\log K=1.50,
\end{equation}
and therefore
\begin{equation}
    \log K=3,\qquad K=e^3\approx 20.09.
\end{equation}
The expression therefore belongs to the generating Gompertz family. The rounded
coefficients imply a carrying capacity close to, but not exactly equal to, the
generating value \(K=20\). This realization provides a positive control for
family recovery under dense noiseless sampling.

\begin{figure}[htbp]
\centering
\includegraphics[width=0.66\textwidth]{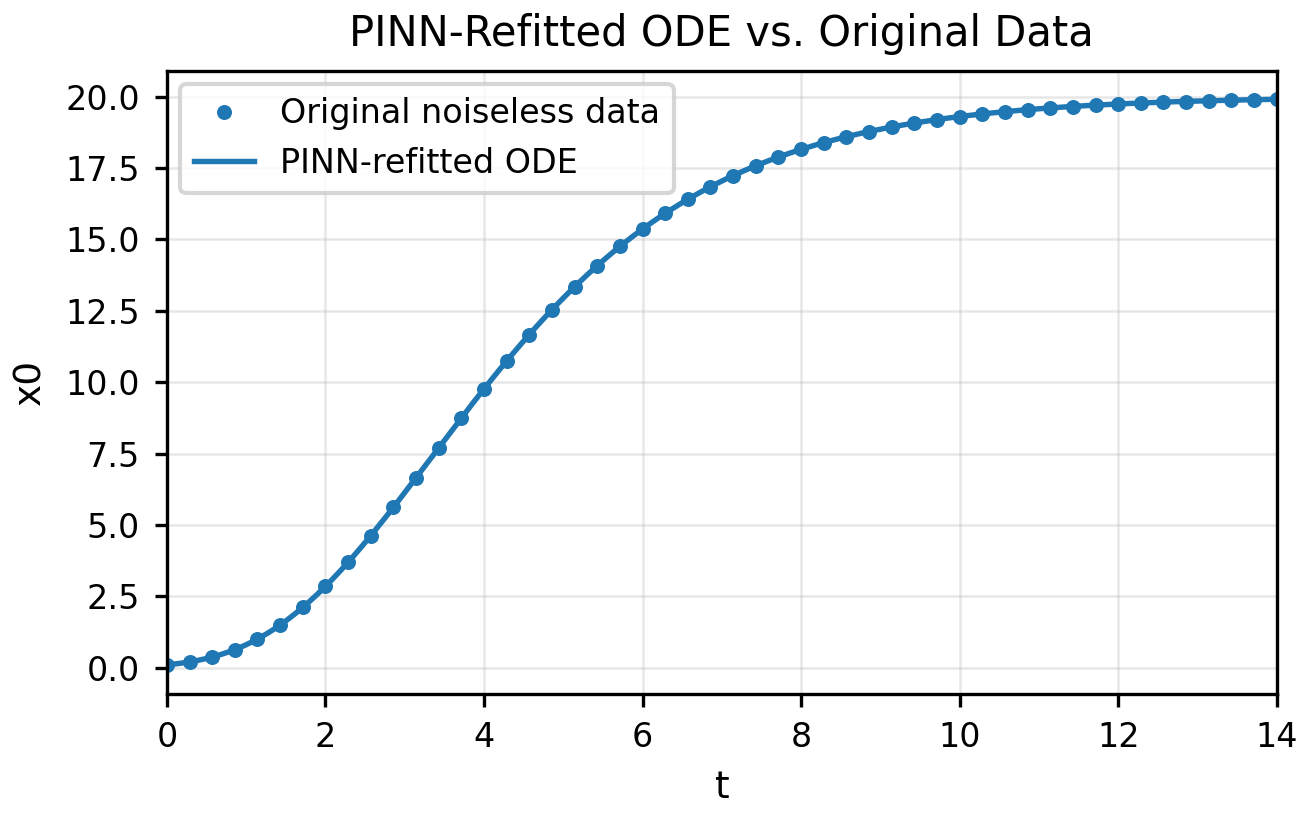}
\caption{Dense noiseless positive control with 50 observations on \([0,14]\).
The curve is the solved PINN-refitted ODE. The rounded recovered expression
\(V(1.50-0.500\log V)\) is algebraically equivalent to Gompertz with
\(a=0.5\) and \(K=e^3\approx20.09\).}
\label{fig:sanity}
\end{figure}

\subsection{Sparse symbolic search can favor plausible but incorrect laws}
Under sparse noisy sampling, the default symbolic-regression choice can have an
incorrect functional form. A representative surrogate, with \(C>1\), is
\begin{equation}
    f_C(V)=\log(C-V).
\end{equation}
This family has a stable equilibrium at
\begin{equation}
    V^\star=C-1,
\end{equation}
because \(\log(C-V^\star)=0\) and \(f_C'(V^\star)=-1<0\). Nevertheless it is not
Gompertz. It satisfies
\begin{equation}
    f_C(0)=\log C\ne0,
\end{equation}
whereas Gompertz satisfies
\begin{equation}
    \lim_{V\to0^+}0.5V\log(20/V)=0.
\end{equation}
Moreover, \(\log(C-V)\) is monotone decreasing in \(V\), while the Gompertz rate
is unimodal with maximum near \(V=K/e\). Hence a log-saturation surrogate can
track the saturation phase while missing the early and intermediate growth law.

A benchmark with Gompertz truth, \(N=6\), and \(8\%\) noise illustrates this
instability. For each of 30 data seeds (47--76), PySR generates a candidate set
from spline-derived phase-space data. Its default complexity-aware selection
returns a non-Gompertz expression in 29 cases. This selection balances
phase-space accuracy and complexity; it is not simply the expression with the
smallest phase-space error.

The generating equation at its known parameters has smaller trajectory RMSE
than the default PySR choice in 24 of 30 realizations, counting two failed
candidate integrations as infinite error. Among the 28 finite comparisons, the
count is 22. These errors are evaluated at the six noisy training observations.
Selecting the smallest training-trajectory RMSE from the PySR candidate set
returns a Gompertz expression in only 3 of 30 realizations. Thus neither the
default symbolic score nor training-trajectory error reliably identifies the
generating family in this sparse regime.

\begin{figure}[htbp]
\centering
\includegraphics[width=0.70\textwidth]{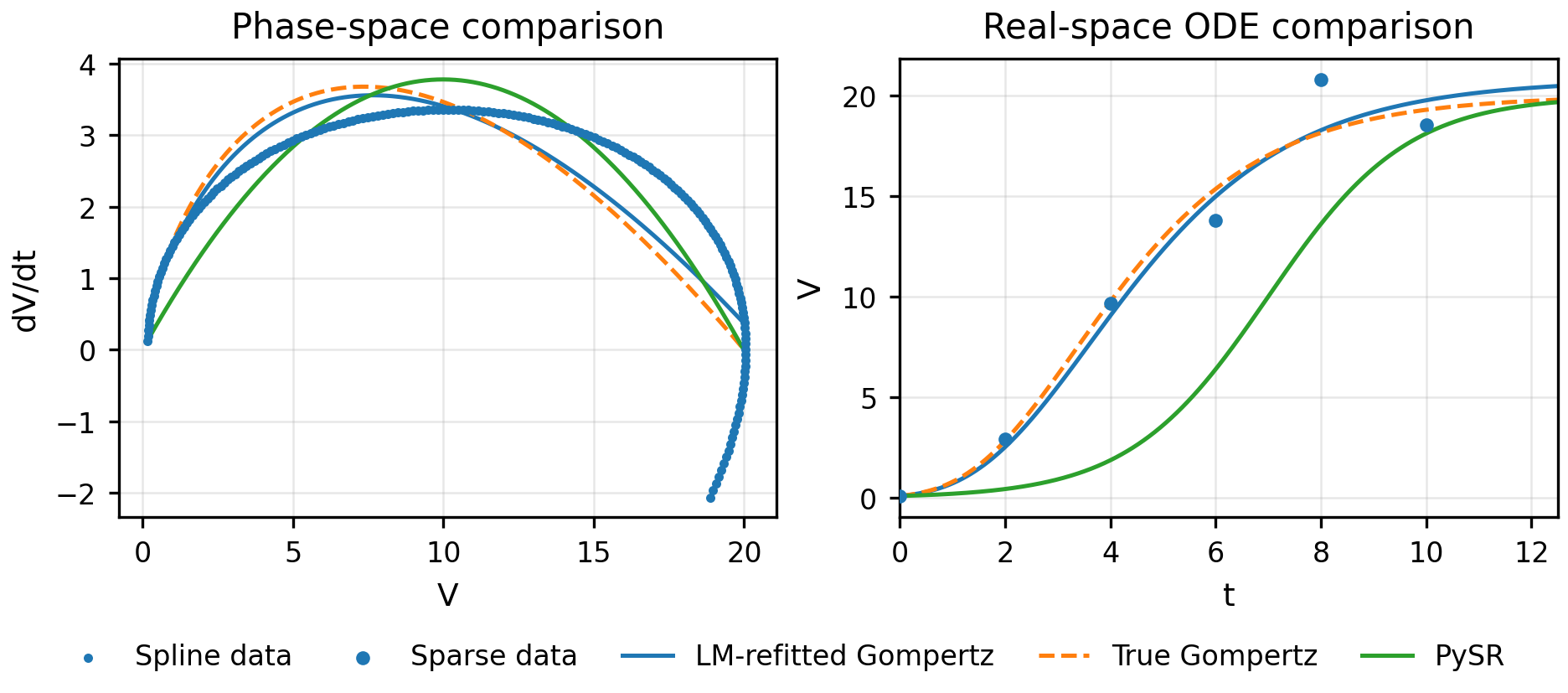}
\caption{A separate sparse example with \(N=6\), \(\sigma=0.08\), data seed 47,
and observations on \([0,10]\). The PySR law
\(V(0.7568282-0.037870117V)\) follows the estimated phase-space relation but
has a different trajectory from Gompertz truth. The blue Gompertz curve is
fitted to spline-derived \((V,\dot V)\) pairs, not to trajectory residuals.
Trajectories are displayed through \(t=12.5\).}
\label{fig:flip}
\end{figure}

\begin{table}[htbp]
\centering
\caption{Ranking instability across 30 sparse noisy Gompertz realizations.
Trajectory errors use the six noisy training observations; the generating-law
baseline uses the known parameters. Bracketed intervals are two-sided 95\%
Wilson intervals. The 24/30 comparison counts two failed candidate integrations
as infinite error; the corresponding finite-only count is 22/28.}
\label{tab:ranking-instability}
\begin{tabularx}{\textwidth}{@{}>{\raggedright\arraybackslash}X>{\raggedright\arraybackslash}p{0.34\textwidth}@{}}
\toprule
Quantity & Value \\
\midrule
Truth & Gompertz, \(a=0.5, K=20, V_0=0.1\) \\
Data regime & \(N=6,\ \sigma=0.08,\ R=30\) \\
Default PySR choice is non-Gompertz & \(29/30=97\%\) \([83\%,99\%]\) \\
Lowest training-RMSE candidate is Gompertz & \(3/30=10\%\) \([3\%,26\%]\) \\
Generating law has lower training RMSE than default choice & \(24/30=80\%\) \([63\%,90\%]\) \\
Median training RMSE, generating law & \(1.145\) \\
Median training RMSE, default PySR choice & \(1.815\) \\
\bottomrule
\end{tabularx}
\end{table}

\subsection{Refitting separates trajectory fit, forecast accuracy, and law recovery}
The seed-45 example in Table~\ref{tab:seed45} uses
\(N=6\), \(20\%\) noise, and seed 45. The candidate is a quadratic-saturation
law selected from the PySR hall of fame. PINN refitting increases both training
and forecast errors relative to the original constants. Classical nonlinear
least squares gives the smallest training and forecast errors, but the original
PySR expression has the smallest law-level error. Better trajectory prediction
therefore does not imply better recovery of the vector field.

\begin{table}[htbp]
\centering
\caption{Seed-45 summary for a structure-preserving quadratic candidate
\(f(V)=V(0.2526-0.01562V)+2.1931\) from a sparse noisy Gompertz benchmark
(seed 45, \(N=6\), \(\sigma=0.20\),
\(\lambda_{\mathrm{phys}}=0.1\)). The PINN representative uses initialization
seed 53, selected from 20 starts by training MSE. Smaller values are better. The column
\(\Ef\) denotes the full law-level error in \eqref{eq:ef}.}
\label{tab:seed45}
\begin{tabular}{lccc}
\toprule
Method & Training MSE & Forecast RMSE & \(\Ef\) \\
\midrule
PySR constants & 2.302 & 2.417 & 0.198 \\
PINN-refit ODE & 4.008 & 2.850 & 0.261 \\
Classical NLS ODE & 0.987 & 1.823 & 0.378 \\
\bottomrule
\end{tabular}
\end{table}

In the seed-47 example, the selected PySR expression is a linear saturation law.
Both refitters increase its forecast error. The direct PINN-network forecast
has smaller error than the solved PINN-refit ODE but larger error than the
original PySR ODE. The neural trajectory and the trajectory generated by its
fitted constants are therefore evaluated separately.
Both examples retain all 20 PINN starts. Their law-error ranges are
\([0.261,0.321]\) at seed 45 and \([0.434,0.464]\) at seed 47, entirely
above the respective original PySR errors of 0.198 and 0.427. Thus the
law-error deterioration in these two examples is not confined to the selected
start. Appendix~\ref{app:seeds} gives the mean, standard deviation, median,
interquartile range, and extrema across starts.

\begin{table}[htbp]
\centering
\caption{Seed-47 summary for the linear candidate
\(f(V)=3.288-0.1394V\) from
seed 47, \(N=6\), \(\sigma=0.20\), and
\(\lambda_{\mathrm{phys}}=0.1\). The PINN representative uses initialization
seed 55, selected from 20 starts by training MSE. The PINN network output and the
solved PINN-refit ODE are distinct objects.}
\label{tab:seed47}
\begin{tabular}{lccc}
\toprule
Method & Training MSE & Forecast RMSE & \(\Ef\) \\
\midrule
PySR ODE & 13.284 & 2.199 & 0.427 \\
PINN-refit ODE & 13.473 & 4.245 & 0.434 \\
PINN network output & 12.802 & 2.690 & -- \\
Classical NLS ODE & 12.781 & 3.532 & 0.400 \\
\bottomrule
\end{tabular}
\end{table}

Status-gated comparisons omit rows marked as domain failures or unstable refits
from accuracy summaries and count them in method-specific failure rates, even
when an intermediate numerical forecast appears favorable. The single-candidate
examples illustrate metric separation but do not support method-level inference.

Across the six sparse noise conditions at data seed 47 and
\(\lambda_{\mathrm{phys}}=0.1\), the initial filter retains 52 of 64
generated expressions. PINN and classical refitting retain 40 and 44
expressions, respectively. Relative to the original constants of each retained
structure, PINN refitting increases \(\Ef\) in 35 of 40 comparisons, whereas
classical refitting decreases it in 36 of 44. These are descriptive,
within-structure comparisons across the retained sets, not independent
replicate estimates or a comparison on identical retention sets.

\subsection{Refit success and physics-weight sensitivity}
The refitters have distinct potential failure modes. An unchanged classical
parameter vector does not by itself identify the cause of termination, whereas a PINN loss can
become nonfinite when a candidate is evaluated outside its supported domain.
Candidate-aware maps are designed to constrain supported domains; unsupported
expressions and residual training instability remain explicit outcomes.
The implementation also records unsupported parameter normalization when its
one-to-one scaling of constants cannot represent a product of learned
constants outside logarithms. This status occurs before optimization and is
distinct from a domain violation or a refitted equilibrium that is unreachable
from the initial condition.

Each candidate is classified into four cases:
\begin{equation}
\begin{array}{c|c}
\text{Case} & \text{Meaning}\\
\hline
\text{Both succeed} & \text{classical and PINN refits both pass}\\
\text{Classical only} & \text{classical refit passes, PINN fails}\\
\text{PINN only} & \text{PINN refit passes, classical refit fails}\\
\text{Neither} & \text{both refits fail}
\end{array}
\end{equation}
Numerical and domain failures are recorded separately for each method.
Within the classical-success category, parameter movement describes whether
the returned constants changed.
Within the PINN-failure category, unsupported candidate domains are separated
from training instability.

Physics-weight sensitivity is summarized on the grid
\begin{equation}
    \lambda_{\mathrm{phys}}\in\{0.1,1,10\}.
\end{equation}
Table~\ref{tab:physics-weight} pools the selected PINN representative from each
retained candidate across four noise levels at data seed 47. Each representative
is selected by training MSE from the retained PINN starts. The table reports
medians over these representatives, rather than over individual starts. The
smallest physics weight gives the smallest median training, forecast, and
law-level errors among the retained candidate sets. Increasing the physics
weight does not monotonically improve forecasting. Shared data realizations,
different retention rates, and selection of starts limit this
comparison to a description of the retained candidate sets.
At each physics weight, the 36 initially accepted expressions generate 720
configured PINN starts. The retained-start counts are 512, 559, and 537 at
weights 0.1, 1, and 10. Each weight has 120 starts with unsupported parameter
normalization; the remaining failures are domain violations (68, 22, and 63)
or refitted equilibria outside the initial condition's basin (20, 19, and 0).

\begin{table}[htbp]
\centering
\caption{Medians over selected PINN representatives for
\(\sigma=5\%,10\%,15\%,20\%\), \(N=6\), and data seed 47.
Each expression contributes its retained start with the smallest training MSE.
The retained candidate counts are 26/44, 29/44, and 28/44 for physics weights
0.1, 1, and 10, respectively; 36 of the 44 original expressions pass the
initial filter at each weight. Candidate expressions and repeated starts are not
independent data realizations.}
\label{tab:physics-weight}
\begin{tabular}{lccc}
\toprule
\(\lambda_{\mathrm{phys}}\) & Training-trajectory MSE & Forecast RMSE & \(\Ef\)\\
\midrule
0.1 & 3.8852 & 1.5939 & 0.3910 \\
1 & 6.9876 & 3.2325 & 0.4545 \\
10 & 9.9859 & 2.3051 & 0.4709 \\
\bottomrule
\end{tabular}
\end{table}

\subsection{Sampling density, spline degree, and a fixed-family baseline}
Table~\ref{tab:misspec} compares a directly fitted Gompertz model with
PINN-refitted symbolic candidates for one Gompertz data realization per regime
(data seed 47). The observations span \([0,15]\), and forecast errors use \(N\)
noiseless times on \([15.5,25.5]\). Candidate counts refer to expressions in a
PySR run, not repeated data trials. The fixed-Gompertz column contains the error
of one least-squares fit; the candidate column is the median error among
expressions that return a finite forecast.

This diagnostic uses a single PINN start (seed 42), 500 epochs, and candidate
parameter learning rate \(10^{-5}\), rather than the main refitting
configuration. A finite forecast is its inclusion criterion and does not
establish optimizer convergence or admissibility of the refitted expression.
With a cubic smoothing spline, the median candidate forecast error exceeds the
fixed-Gompertz error in all three sampling regimes. In the dense regime with a
quintic spline, four expressions return finite forecasts, all with smaller
errors than the fixed-family baseline. Only one of these four expressions has
Gompertz form. These comparisons describe sensitivity to sampling and spline
construction within the given realizations.

Figure~\ref{fig:misspec} is a separate misspecification illustration. Its
trajectory is generated by a Richards law with \(\nu=2\) and fitted with a
Gompertz model. It is not the data-generating process for
Table~\ref{tab:misspec}.

\begin{figure}[htbp]
\centering
\includegraphics[width=0.70\textwidth]{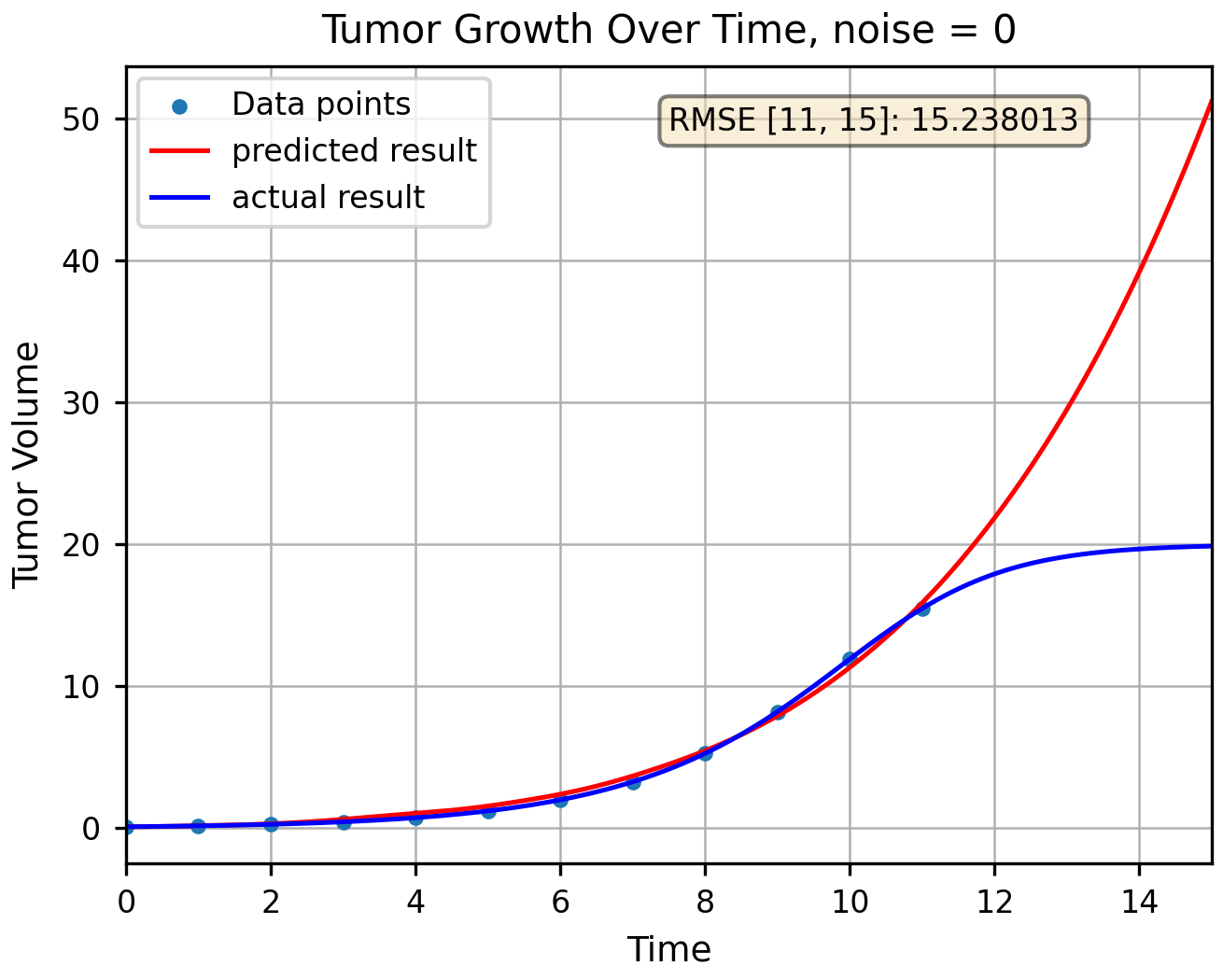}
\caption{A noiseless Richards example with \(\nu=2\), \(r=0.5\), \(K=20\),
\(V_0=0.1\), and 12 observations on \([0,11]\), fitted with a Gompertz family.
The forecast RMSE of 15.238 uses 200 times on \([11,15]\). This illustration
is distinct from the Gompertz-generated comparisons in Table~\ref{tab:misspec}.}
\label{fig:misspec}
\end{figure}

\begin{table}[htbp]
\centering
\caption{Gompertz-generated comparisons at data seed 47. The fixed-family error
is from one fit; the candidate error is a median over finite PINN-refitted
forecasts. ``Lower error'' counts candidate expressions that outperform the
fixed-Gompertz fit. All rows except the last use spline degree \(k=3\).}
\label{tab:misspec}
\small
\begin{tabular}{lcccccc}
\toprule
Regime & \(N\) & \(\sigma\) & Finite/total & \multicolumn{2}{c}{Forecast RMSE} & Lower error \\
\cmidrule(lr){5-6}
& & & & Fixed Gompertz & Candidate median & \\
\midrule
Sparse & 12 & 0.05 & 7/11 & 0.214 & 0.325 & 0/7 \\
Medium & 20 & 0.03 & 8/11 & 0.146 & 0.308 & 2/8 \\
Dense, \(k=3\) & 40 & 0.01 & 8/11 & 0.112 & 0.204 & 1/8 \\
Dense, \(k=5\) & 40 & 0.01 & 4/12 & 0.112 & 0.078 & 4/4 \\
\bottomrule
\end{tabular}
\end{table}

\section{LLM ranking of archived candidate sets}
\label{sec:llm-evaluation}
\subsection{Records and scoring}
The case study comprises six ranking reports dated 20--21 September 2026,
one for each noise level
\(\sigma\in\{0,0.05,0.10,0.15,0.20,0.30\}\) at data seed 47.
Each pool combines original PySR, PINN-refitted, and classical-refitted
expressions from the historical \(\lambda_{\mathrm{phys}}=0.1\) runs.
The pool sizes are 24, 26, 27, 20, 27, and 21, totaling 145 entries.
Every response ranks all supplied IDs exactly once, with its stated top choice
agreeing with rank one. The common prompt is reproduced in
Appendix~\ref{app:prompt}; the original prompt, input files, and unedited ranking
PDFs accompany the reproducibility archive. The records identify neither the
model version nor decoding settings, so the unit of analysis is the recorded
ranking rather than a specified model's repeatable performance distribution.

All 145 visible expressions and training errors match their numerical source
records. Forecast and law errors are recomputed from these exact expressions,
using an independent ODE integration and quadrature calculation, with all
metrics finite. Forecast discrepancies from the source records are below
\(2\times10^{-9}\), and law-error discrepancies are below \(4\times10^{-14}\).
Algebraic classification tests whether
\(f(V)/V=A-B\log V\) with constant \(A\) and \(B>0\); none of the 145
entries belongs to the Gompertz family. Conditional
\(\mathrm{LLMTop1}\) family-recovery accuracy is therefore undefined for all
six pools.

\subsection{Top-choice performance}
Table~\ref{tab:llm-outcomes} compares the recorded top choices with the
minimum-training-MSE baseline. All six LLM top choices correspond to
classical-refitted expressions, although that provenance was omitted from the
visible table. This observation describes the supplied mixed pools and does
not isolate a preference for a refitting method.

\begin{table}[htbp]
\centering
\caption{Retrospective scoring of the six archived LLM rankings. \(n\) is the
pool size; ``Train'' selects the smallest supplied training MSE. IDs are local
to each pool. Both forecast RMSE and law error are computed on the exact
historical expressions, not the later numerical refits.}
\label{tab:llm-outcomes}
\small
\begin{tabular}{ccccrrrr}
\toprule
\(\sigma\) & \(n\) & \multicolumn{2}{c}{Selected ID} &
\multicolumn{2}{c}{Forecast RMSE} & \multicolumn{2}{c}{\(\Ef\)} \\
\cmidrule(lr){3-4}\cmidrule(lr){5-6}\cmidrule(lr){7-8}
& & LLM & Train & LLM & Train & LLM & Train \\
\midrule
0.00 & 24 & C6  & C21 & 0.139 & 0.123 & 0.330 & 0.123 \\
0.05 & 26 & C9  & C17 & 0.289 & 0.526 & 0.177 & 0.091 \\
0.10 & 27 & C12 & C12 & 0.815 & 0.815 & 0.235 & 0.235 \\
0.15 & 20 & C9  & C18 & 2.125 & 1.296 & 0.346 & 0.329 \\
0.20 & 27 & C15 & C24 & 3.532 & 1.549 & 0.400 & 0.355 \\
0.30 & 21 & C6  & C6  & 4.206 & 4.206 & 0.397 & 0.397 \\
\bottomrule
\end{tabular}
\end{table}

Relative to the training-error baseline, the LLM choice has lower forecast
error in one pool, equal error in two, and higher error in three. Its law error
is equal in two pools and higher in four. Equality here uses absolute tolerance
\(10^{-8}\); in both equal cases the selected IDs are identical. Forecast
regret relative to the best candidate in each complete pool is positive in
all six cases, ranging from 0.027 to 1.983. Law-error regret ranges from zero
to 0.207 and vanishes only at \(\sigma=0.10\). Thus the archived rankings do
not demonstrate added predictive or law-recovery value over minimum training
error. The six noise conditions share a data seed and candidate-generation
pipeline; these counts are descriptive, not independent-replicate significance
tests.

\subsection{Scientific explanations}
The responses often favor smooth saturation behavior and penalize restricted
logarithmic domains, consistent with the prompt's robustness-first ordering.
Their explanations nevertheless require mathematical verification. For
example, the 20\% noise response treats logarithmic singularities at particular
volumes as lying within the forecast interval \([15.5,25.5]\), which is an
interval of \emph{time}. Comparing a state-domain boundary with a time horizon
does not establish whether a trajectory approaches that boundary. The 5\%
response describes C14 as having negative low-volume growth, although its
expression is a positive quadratic factor multiplied by \(20.1231-V\),
and hence is positive below that equilibrium. These errors illustrate the need
to verify ranking explanations by domain analysis and numerical evaluation.

\section{Discussion}
The numerical comparisons separate three sources of uncertainty: the equation
family proposed by symbolic search, the constants obtained by refitting, and
the ranking of accepted candidates. These uncertainties affect different
aspects of extrapolation.

First, sparse tumor-growth law discovery depends on both identifiability and
optimization. When observations are concentrated near saturation, several
laws produce similar trajectories but different vector fields. Derivative-based
phase-space symbolic regression can rank plausible saturation
surrogates above the generating law.

Second, trajectory fit is not law recovery. The log-saturation family
\(\log(C-V)\) illustrates the point: its equilibrium can be moved close to the
true carrying capacity, producing good forecasts in some cases, while the family
still has a parameter-dependent restricted domain that may exclude the
evaluation interval, the wrong near-zero behavior, and the wrong vector-field
shape. Law-level error \(\Ef\) therefore distinguishes a curve fit from recovery
of the governing equation.

Third, PINN refitting is best viewed as a regularized refit of a proposed law,
not as a mechanism that rescues misspecified law families. It can improve
forecast behavior for some accepted candidates, and it can fail on others when
a candidate domain is unsupported or when optimization becomes unstable. These
are mathematically different events from a
classical no-update solution, whose interpretation also requires the optimizer's
termination status. Candidate-aware state maps constrain the region in which
the physics residual is evaluated. Numerical errors must be read together with
the integration and optimization outcomes, especially when a summary retains
only a subset of candidate expressions or initializations.

A fixed large loss assigned after a NaN does not solve the domain problem, and
ad hoc parameter perturbations change the candidate being refit. Domain
constraints must enter before the symbolic right-hand side is evaluated. For
simple interval domains this can be achieved by smooth output
parameterizations; candidates with unsupported domains are recorded as such.

In the six archived cases, the LLM choices improve on the training-error
baseline in only one forecast comparison and no law-error comparison.
The volume--time confusion in one explanation also shows that a ranking
rationale may be mathematically incorrect. Deterministic checks and
solved-ODE evaluation remain necessary for LLM-selected equations.

The benchmark is synthetic,
one-dimensional, autonomous, and focused on monotone untreated growth. Forecast
and vector-field errors are available because the generating law is known; they
are not directly observable in clinical data. The symbolic stage depends on
spline differentiation, and its expressions are nested within a common data
realization rather than independent replicates. PINN conclusions depend on
initialization and optimization settings. The LLM results concern six saved
responses to one prompt, with mixed-arm, nonrandomized pools and
future-informed historical eligibility. They do not establish causal ranking
benefits, robustness to presentation order, or performance across LLMs.

\section{Conclusion}
In this synthetic benchmark, accurate trajectory fits and forecasts can
coexist with incorrect growth laws. Among retained candidates under the tested
settings, PINN refitting often increases vector-field error, whereas classical
nonlinear least squares usually reduces it. The archived LLM rankings show no
consistent improvement over selection by training error and include
mathematically incorrect explanations. The neural surrogate, the solved
candidate ODE, and the recovered vector field therefore require separate
evaluation. Neither low training error nor a persuasive ranking rationale
certifies recovery of the governing law.

\section*{Acknowledgments}
W. Wang was partially supported by the Simons Foundation Travel Support for
Mathematicians (No.~0007730). The authors acknowledge use of AI tools.

\section*{Data and code availability}
The supplementary reproducibility archive contains the symbolic candidate
tables, refitting code and numerical records, the archived LLM prompt and input
files, the six original ranking reports, and the retrospective scoring code. \url{https://github.com/Victor230791/AI_Assisted_Tumor_Growth_Law_Discovery_Code_Base}

\appendix
\section{Archived LLM ranking prompt}
\label{app:prompt}
The following text reproduces the substantive instructions in the archived
prompt, with mathematical notation typeset. The original PDF also requests
export of the response as a dated PDF named after the input CSV. Its reference
to a randomized table is part of the original prompt; the stored inputs use
sequential IDs and interleaved method order, as described in Section~3.5.

\begin{quote}\small
You are an independent, blinded evaluator in a symbolic tumor-growth ODE
model-selection experiment. Treat this as an isolated task. Do not use previous
conversations, memory, web browsing, external sources, established laws, or
results from any other experiment.

The tumor volume \(V(t)>0\) is modeled by the autonomous ODE
\[
    \frac{dV}{dt}=f(V).
\]
The data-generating equation, its parameters, all future observations, forecast
errors, and law-level errors are intentionally withheld.

Every candidate below has already passed the following deterministic
requirements:
\begin{enumerate}[leftmargin=2em,itemsep=1pt]
    \item the right-hand side is real and finite on the required positive-volume
    interval;
    \item the trajectory of their solutions with the initial condition
    \(V(0)=0.1\) remains positive and numerically solvable;
    \item the candidate has a finite positive stable equilibrium.
\end{enumerate}

Rank the accepted candidates from most to least scientifically defensible for
extrapolating tumor growth. Apply these criteria in order:
\begin{enumerate}[leftmargin=2em,itemsep=1pt]
    \item mathematical and biological robustness, including plausible positive
    growth below saturation, stable saturation, and no evident fragility from a
    nearby singularity or restricted domain;
    \item agreement with the observed data, using only the supplied
    training-trajectory MSE;
    \item parsimony and interpretability, using the supplied complexity; and
    \item lower complexity when the preceding considerations are essentially
    tied.
\end{enumerate}
Do not favor a candidate merely because it resembles a proposed or known growth
law. Do not identify the hidden generating equation, refit coefficients,
simulate trajectories, calculate missing quantities, modify or simplify an
expression, reject an accepted candidate, or propose a new equation. Do not
infer anything from candidate IDs or row order.

Rank every candidate exactly once. Return a Markdown table with columns
\texttt{Rank}, \texttt{Candidate ID}, \texttt{Main strength},
\texttt{Main concern}, and \texttt{Reason for rank}; then identify one top
choice by candidate ID with a two-sentence explanation; then give a two- or
three-sentence comparison of the candidate set. Do not output JSON, code, a
response schema, or a rewritten prompt.

Randomized accepted candidate table is attached in the CSV file. Refer to the
TXT file for trial information.
\end{quote}

\section{Candidate-table schema}
The archived ranking inputs and retrospective evaluation records have the
following fields. Hidden columns were absent from the supplied CSV and TXT
files; historical pool eligibility was nevertheless future-informed.
\begin{center}\small
\begin{tabularx}{\textwidth}{@{}>{\raggedright\arraybackslash}p{0.26\textwidth}>{\raggedright\arraybackslash}p{0.16\textwidth}>{\raggedright\arraybackslash}X@{}}
\toprule
Field & LLM visibility & Meaning \\
\midrule
Trial information & Visible & noise level and training/forecast intervals \\
Training observations & Visible & only the observed training \((t,V)\) values \\
Candidate ID and expression & Visible & sequential ID and fitted right-hand side \\
Complexity & Visible & symbolic complexity score \\
Training-trajectory MSE & Visible & solved-ODE fit to the observed sparse points \\
Acceptance statement & Visible & shared statement that every listed candidate passed the required filters \\
Future volumes and forecast RMSE & Hidden & author-computed evaluation of the archived selections \\
\(\Ef\) and generating-family match & Hidden & synthetic-oracle evaluation fields \\
LLM rank and rationale & Output & recorded ranking and scientific explanation \\
\bottomrule
\end{tabularx}
\end{center}

\section{Variation across PINN initializations}
\label{app:seeds}
Table~\ref{tab:seed-statistics} summarizes the same two candidates as
Tables~\ref{tab:seed45} and~\ref{tab:seed47}, across initialization seeds
43--62. All 20 starts pass for each example. For a metric \(M_s\), SD uses the
population convention
\[
\mathrm{SD}=\left(\frac1{20}\sum_{s=1}^{20}(M_s-\overline M)^2\right)^{1/2}.
\]
IQR is the difference
between the empirical 75th and 25th percentiles. These are within-candidate
initialization summaries, not confidence intervals over independent data sets.
The numerical archive also retains every start's status, restored expression,
network metrics, and fitted-ODE metrics.

\begin{table}[htbp]
\centering
\caption{Across-start PINN-refit ODE errors for the two \(20\%\)-noise examples.
Each row contains 20 valid starts; values are rounded to three decimals.}
\label{tab:seed-statistics}
\small
\begin{tabular}{llrrrrrr}
\toprule
Data seed & Metric & Mean & SD & Median & IQR & Min & Max \\
\midrule
45 & Training MSE & 5.745 & 0.919 & 5.864 & 0.964 & 4.008 & 7.740 \\
45 & Forecast RMSE & 3.111 & 0.096 & 3.140 & 0.085 & 2.850 & 3.279 \\
45 & \(\Ef\) & 0.293 & 0.015 & 0.296 & 0.014 & 0.261 & 0.321 \\
\addlinespace
47 & Training MSE & 13.811 & 0.373 & 13.715 & 0.284 & 13.473 & 14.834 \\
47 & Forecast RMSE & 4.447 & 0.223 & 4.419 & 0.206 & 4.064 & 5.047 \\
47 & \(\Ef\) & 0.443 & 0.008 & 0.441 & 0.007 & 0.434 & 0.464 \\
\bottomrule
\end{tabular}
\end{table}

\end{document}